\newcommand{\R}{\mathbb{R}}
\newcommand{\E}{\mathbb{E}}
\newtheorem{theorem}{Theorem}
\newtheorem{lemma}[theorem]{Lemma}
\title{Bounding the Error From Reference Set Kernel Maximum Mean Discrepancy}
\author{Alexander Cloninger}
\date{} 
\begin{document}
\maketitle

\begin{abstract}
In this paper, we bound the error induced by using a weighted skeletonization of two data sets for computing a two sample test with kernel maximum mean discrepancy.  The error is quantified in terms of the speed in which heat diffuses from those points to the rest of the data, as well as how flat the weights on the reference points are, and gives a non-asymptotic, non-probabilistic bound.  The result ties into the problem of the eigenvector triple product, which appears in a number of important problems.  The error bound also suggests an optimization scheme for choosing the best set of reference points and weights.  The method is tested on a several two sample test examples.
\end{abstract}

\section{Introduction}

The purpose of this short note is to quantify the minimal loss of power created by subsampling the kernel used in two sample testing via kernel maximum mean discrepancy (MMD) \cite{cheng2017Anisotropic}.  This paper serves to demonstrate that the reduction in computational complexity of the asymmetric test \cite{cheng2017Anisotropic} does not significantly bias or increase the variance of the two sample statistic calculated using functions sampled from a reproducing kernel Hilbert space \cite{gretton2012kernel}.  The results rely on recent results \cite{linderman2018integration,steinerberger2017spherical} about optimal quadrature points and weights for integrating eigenfunctions of a Laplacian on some domain.

The standard form of MMD \cite{gretton2012kernel} seeks to measure the distance between two distributions $p$ and $q$ given $n$ points sampled from $p$ (called $X$) and $m$ points sampled from $q$ (caled $Y$).  This is done by constructing a kernel $K:(X\cup Y)\times (X\cup Y) \rightarrow [0,1]$ that measures similarity between data points.
The MDM statistic is found by comparing inter and intra class affinities between $X$ and $Y$ via
\begin{align*}
MMD(X,Y) = \frac{1}{N^2} \sum_{x\in X, x'\in X} K(x,x') +  \frac{1}{M^2} \sum_{y\in Y, y'\in Y} K(y,y')  - 2 \frac{1}{MN} \sum_{x\in X, y\in Y} K(x,y). 
\end{align*}
The largest issue in computation is building the $(n+m)\times (n+m)$ size kernel for comparing the point values.  This can be reduced by use of approximate nearest neighbors or the linear time MMD which only compares consecutive points in an arbitrary index.  However, these introduce an element of randomness and significantly higher variance in the test statistics.

In \cite{cheng2017Anisotropic}, we introduce two main notions for improving MMD: only comparing to a small, fixed reference set of points in the space, and building a local covariance matrix around these fixed points to deal with low dimensional data.  This paper only focuses on the former, and seeks to analyze the error in MMD introduced by subselecting the reference set to compare two distributions to.  

The speed of computing MMD may not be of large concern when only comparing two data sets $X$ and $Y$, but it becomes a factor when dealing with a large number of datasets $\{X_i\}_{i=1}^C$.  This is because computing $MMD(X_i, X_j)$ for all $i$ and $j$ requires $C^2$ kernels to be built.  Significant reduction can be made by only comparing to a fixed reference set, as this means we only have to build $C$ kernels and each of which is size $(n+m)\times ($number reference points$)$.

The main question introduced in this framework is how to choose and weight the referece points in order to best approximate the full $MMD$.  We domonstrated in \cite{cheng2017Anisotropic} that random sampling is a fairly robust method of choosing the reference points.  In this paper, we also introduce a notion of weighing the reference points, and demonstrate that this improves upon the empirical successes of equally weighted random sampling.  Also, we bound the error in MMD in terms of the location of the reference points and associated weights.  Qualatatively, the bound says that the error is determined by placing small ``heat sources'' of varying weights at the associated reference points, and measuring how flat the heat distribution is on the rest of the data after some small amout of diffusion time.  We quantify this statemen in Section \ref{thm}, and discuss the assumptions to the theorem in Section \ref{tripleprod}.    This theorem also suggests a better method of choosing the reference points and weights prior to building the test statistic.  We discuss how to better choose the reference points and improve upon the bounds in Section \ref{examples}.

\section{Diffusion Bounds for MMD}\label{thm}
%
We begin by formally introducing the notion of reference points and weighted reference point MMD.  Take two datasets $X\sim p$ and $Y\sim q$, where $p$ and $q$ are probability distributions in $\R^d$, with $|X|=n$ and $|Y|=m$.  We restrict ourselves to a class of kernels $K$ on the data that we refer to as \emph{random walk kernels}, which can be represented as
\begin{align*}
K(x,y) = \sum_{z\in X\cup Y} k(x,z) k(z,y), & \textnormal{ for some kernel } k.
\end{align*}
This is a large class of kernels which, intuitively, represents the affinity between two points as the probability of stepping from $x$ to any point in the data, and then stepping to $y$.  A simple examle is to take $k$ as the Gaussian $k(x,z) = e^{-\|x-y\|^2/\sigma^2}$, and this makes $K = k^2$.

The reference points are a smaller set of points $R\subset (X\cup Y)$ selected from the data.  The reference point kernel measures the similarity between all points and the reference set, $k_R:(X\cup Y)\times R \rightarrow \R^+$, which leads to a reference point MMD
\begin{align*}
MMD(X,Y;k_R)^2 = \frac{1}{|R|} \sum_{r\in R} \left(\frac{\sqrt{n+m}}{n} \sum_{x\in X} k_R(x,r) -\frac{\sqrt{n+m}}{m} \sum_{y\in Y} k_R(y,r)\right)^2.
\end{align*}
The renormalization of $\frac{\sqrt{n+m}}{n}$ comes from the desire to put $MMD(X,Y;k_R)$ on the same scale as $MMD(X,Y;K)$.

These reference points can also be assigned weights $a:R\rightarrow [0,1]$ that satisfy $\sum_{r\in R} a(r) = 1$.  This leads to the weighted reference point MMD
\begin{align*}
MMD_a(X,Y;k_R)^2 = \sum_{r\in R} a(r)  \left(\frac{\sqrt{n+m}}{n} \sum_{x\in X} k_R(x,r) - \frac{\sqrt{n+m}}{m} \sum_{y\in Y} k_R(y,r)\right)^2.
\end{align*}

Theorem \ref{thm:mainthm} bounds the error between $MMD_a(X,Y;k_R)$ and $MMD(X,Y;K)$, in terms of $R$ and $a$, by thinking of $MMD(X,Y;K)$ as being the norm of a particular function on the data and using results of \cite{linderman2018integration} for subsampling graphs.  However, our problem falls a little outside the purview of  \cite{linderman2018integration}, as our particular function has a high frequency component.  We account for this with an assumption below.  We are now prepared to introduce the following lemmas and theorem.

\begin{lemma}\label{lemma:mmd}
$MMD(X,Y;K)^2 = \frac{1}{n+m} \sum_{z\in X\cup Y} f(z)$ for 
\begin{align*}
f(z) = \left(\frac{\sqrt{n+m}}{n} \sum_{x\in X} k(x,z) - \frac{\sqrt{n+m}}{m} \sum_{y\in Y} k(y,z) \right)^2.
\end{align*}
\end{lemma}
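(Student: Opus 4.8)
The plan is to prove the identity by direct expansion: write out $f(z)$, sum over $z \in X\cup Y$, and collapse the innermost sum using the random walk kernel representation $K(x,y)=\sum_{z\in X\cup Y} k(x,z)k(z,y)$. First I would abbreviate the two renormalized sums by setting $u(z) = \frac{\sqrt{n+m}}{n}\sum_{x\in X} k(x,z)$ and $v(z) = \frac{\sqrt{n+m}}{m}\sum_{y\in Y} k(y,z)$, so that $f(z) = (u(z)-v(z))^2 = u(z)^2 - 2u(z)v(z) + v(z)^2$, and the target is to show $\frac{1}{n+m}\sum_z f(z)$ reproduces the three terms of $MMD(X,Y;K)^2$.

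Next I would sum each of the three pieces over $z$ and carry the global factor $\frac{1}{n+m}$ through. For the first piece,
\begin{align*}
\frac{1}{n+m}\sum_{z\in X\cup Y} u(z)^2 = \frac{1}{n^2}\sum_{x\in X}\sum_{x'\in X}\sum_{z\in X\cup Y} k(x,z)k(x',z),
\end{align*}
and analogously for $v(z)^2$ and for the cross term. The key step is recognizing the innermost sum: assuming $k$ is symmetric (as in the Gaussian example in the text), $\sum_{z\in X\cup Y} k(x,z)k(x',z) = \sum_{z\in X\cup Y} k(x,z)k(z,x') = K(x,x')$ by the defining random walk kernel representation. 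This turns the three pieces into $\frac{1}{n^2}\sum_{x,x'} K(x,x')$, $\frac{1}{m^2}\sum_{y,y'} K(y,y')$, and $\frac{2}{nm}\sum_{x,y} K(x,y)$ respectively, which are exactly the three terms of $MMD(X,Y;K)^2$ from the introduction (identifying $N=n$, $M=m$).

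There is no real obstacle here; the content is bookkeeping of the normalization constants, with the one substantive observation being that the renormalizations $\frac{\sqrt{n+m}}{n}$ and $\frac{\sqrt{n+m}}{m}$ combine with the global $\frac{1}{n+m}$ to produce precisely $\frac{1}{n^2}$, $\frac{1}{m^2}$, and $\frac{2}{nm}$, and that the factor $2$ on the cross term falls out of expanding the square. The only assumption I would flag explicitly is the symmetry of $k$, needed to identify $\sum_z k(x,z)k(x',z)$ with $K(x,x')$; this is standard for the random walk kernels the paper considers.
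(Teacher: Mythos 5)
Your proposal is correct and is essentially the same calculation as the paper's proof, just run in the opposite direction: the paper starts from the expectation form of $MMD^2(X,Y;K)$, substitutes $K(x,x')=\sum_z k(x,z)k(z,x')$, and collapses to $\frac{1}{n+m}\sum_z f(z)$, whereas you expand $\sum_z f(z)$ and reassemble the three kernel sums. Your explicit flagging of the symmetry of $k$ (needed to identify $\sum_z k(x,z)k(x',z)$ with $K(x,x')$) is a detail the paper leaves implicit, but otherwise the two arguments coincide.
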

\begin{proof}
This is a simple calculation, but we'll elaborate here for clarity.  
\begin{align*}
MMD^2(X,Y;K) &= \E_{x,x'\in X} [K(x,x')] + \E_{y,y'\in Y} [K(y,y')] - 2 \E_{x\in X,y\in Y} [K(x,y)] \\
&=  \E_{x,x'\in X} [\sum_z k(x,z) k(z,x')] + \E_{y,y'\in Y} [\sum_z k(y,z) k(z,y')] \\
&\hspace{.2in} - 2 \E_{x\in X,y\in Y} [\sum_z k(x,z) k(z,y)] \\
&= \sum_{z\in X\cup Y} \left( \frac{1}{n}\sum_{x\in X} k(x,z) - \frac{1}{m}\sum_{y\in Y} k(y,z) \right)^2.\\
&= \frac{1}{n+m}\sum_{z\in X\cup Y}  f(z),
\end{align*}
for $f(z) = \left(\frac{\sqrt{n+m}}{n} \sum_{x\in X} k(x,z) - \frac{\sqrt{n+m}}{m} \sum_{y\in Y} k(y,z) \right)^2$
\end{proof}

In order to apply any results from \cite{linderman2018integration}, we must consider not only the kernel $K$ but also the renormalized \emph{lazy random walk} kernel.  We will first construct the kernel, and then discuss why this isn't a hinderance for our methods.  Let $D_x  =  \sum_{z \in X\cup Y} K(x,z)$, and let $d_{max}= \max_{x\in X\cup Y} D_x$.   The lazy walk kernel can then be written as $P = \frac{1}{d_{max}}\left(K - D + d_{\max}\cdot I \right)$, where $D$ is the diagonal matrix of $D_x$.  The only difference between $K$ and $P$ (other than a global scaling by $d_{max}$) is the altered diagonal of the matrix.  We note that, even if we only know $k_R$ across a random set of reference points, it's still possible to estimate $D$ by appropriately rescaling $\sum_{r\in R} k(z,r)$ according to $(n+m)/|R|$. 

Now we are prepared to state the main assumption that must be made in order to prove the main result.  We take $\Phi_\lambda$ to be the subspace of all eigenfunctions of $P$ who's eigenvalue satisfies $|\lambda_\ell| > \lambda$.  The assumption is that $\|(I-\Phi_\lambda \Phi_\lambda^*)f\|$ is small.  In other words, we must assume that $f$ projects only a small amount of energy onto the high frequency eigenfunctions of $K$.  This assumption arises from the fact that the result boils down to comparing eigenfunctions of $P$ and $K=\Psi \Sigma \Psi^*$, which are highly related to one another.  In particular, the result requires examining the pointwise product of eigenfunctions $\langle \phi_i, \psi_j \psi_k\rangle$.  Even in the case that $\Phi = \Psi$, this becomes the triple product coefficients, for which there are no bounds in general.  This assumption is discussed in simple cases and with empirical evidence in Section \ref{tripleprod}.  We are now prepared to address the main theorem.

\begin{theorem}\label{thm:mainthm}
	Let $MMD(X,Y;K) = \tau$, assume $\|\Phi_\lambda \Phi_\lambda^*f\| = (1-\epsilon) \tau$.  Then for a given reference set $R$ and weights $a$, 
	\begin{align*}
	|MMD_a^2 & (X,Y;k_R) - MMD^2(X,Y;K)| < \\
	&\tau \left[\frac{(1-\epsilon)}{\lambda} \left(\frac{1}{d_{max}^2} \left\| (K + d_{max}\cdot I - D) \sum_w a_w \delta_w \right\|_2^2 - \frac{1}{n+m}\right)^{1/2}  + \left( \frac{1}{\sqrt{n+m}} + \|a_w\| \right) \epsilon \right].
	\end{align*}

\end{theorem}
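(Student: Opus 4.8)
The plan is to recognize the whole statement as a graph-quadrature estimate in disguise and to follow the template of \cite{linderman2018integration}. First I would strip away the squares. By Lemma \ref{lemma:mmd}, writing $\nu = \frac{1}{n+m}\mathbf{1}$ for the uniform probability vector on $X\cup Y$ and $f$ for the function defined there, we have $MMD^2(X,Y;K) = \langle\nu,f\rangle$; taking $k_R$ to be the restriction of $k$ to $R$, the definition of the weighted reference MMD is exactly $MMD_a^2(X,Y;k_R) = \langle\mu,f\rangle$ with $\mu = \sum_w a_w\delta_w$. Since $\sum_w a_w = 1$, both $\mu$ and $\nu$ are probability vectors, so the quantity to control is the linear functional $|\langle\mu-\nu,f\rangle|$, and crucially $\langle\mu-\nu,\mathbf{1}\rangle = 0$. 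I would also record the two elementary norms $\|\mu\| = \|a_w\|$ and $\|\nu\| = (n+m)^{-1/2}$, which are precisely the factors multiplying $\epsilon$ in the second term.

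Next I would split $f$ along the spectral projection, $f = \Phi_\lambda\Phi_\lambda^* f + (I-\Phi_\lambda\Phi_\lambda^*)f$, where the smooth piece has norm $(1-\epsilon)\tau$ by hypothesis and the rough piece has norm $\epsilon\tau$ (the content of the smallness assumption stated before the theorem). Then $\langle\mu-\nu,f\rangle$ decomposes into a low-frequency and a high-frequency contribution, bounded separately. For the high-frequency part, Cauchy--Schwarz followed by the triangle inequality gives $|\langle\mu-\nu,(I-\Phi_\lambda\Phi_\lambda^*)f\rangle| \le \|\mu-\nu\|\,\|(I-\Phi_\lambda\Phi_\lambda^*)f\| \le \big(\|a_w\| + (n+m)^{-1/2}\big)\,\epsilon\tau$, which is exactly the second bracketed term times $\tau$.

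The heart of the argument is the low-frequency contribution. Moving the self-adjoint projection onto the measure, $\langle\mu-\nu,\Phi_\lambda\Phi_\lambda^* f\rangle = \langle\Phi_\lambda\Phi_\lambda^*(\mu-\nu),f\rangle$, and Cauchy--Schwarz bounds this by $\|\Phi_\lambda\Phi_\lambda^*(\mu-\nu)\|\cdot(1-\epsilon)\tau$. The diffusion operator $P$ enters because every retained eigenfunction satisfies $|\lambda_\ell|>\lambda$: expanding $\Phi_\lambda\Phi_\lambda^*(\mu-\nu)$ in the orthonormal eigenbasis of $P$ and using $\langle\mu-\nu,\phi_\ell\rangle = \lambda_\ell^{-1}\langle P(\mu-\nu),\phi_\ell\rangle$ (self-adjointness of $P$) yields $\|\Phi_\lambda\Phi_\lambda^*(\mu-\nu)\|^2 \le \lambda^{-2}\|P(\mu-\nu)\|^2$ after extending the Parseval sum to all eigenfunctions. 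Finally I would convert $\|P(\mu-\nu)\|^2$ to the stated form: since $K\mathbf{1} = D\mathbf{1}$ we have $P\mathbf{1} = \mathbf{1}$, hence $P\nu = \nu$ and $\langle P(\mu-\nu),P\nu\rangle = \langle\mu-\nu,\nu\rangle = \tfrac{1}{n+m}\langle\mu-\nu,\mathbf{1}\rangle = 0$; a Pythagorean identity then gives $\|P(\mu-\nu)\|^2 = \|P\mu\|^2 - \|P\nu\|^2$. Substituting $P = d_{max}^{-1}(K+d_{max}\cdot I-D)$ turns $\|P\mu\|^2$ into $d_{max}^{-2}\|(K+d_{max}\cdot I-D)\sum_w a_w\delta_w\|_2^2$ and $\|P\nu\|^2$ into $\tfrac{1}{n+m}$, producing the first bracketed term. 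Adding the two contributions gives the theorem.

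I expect the genuine obstacle to be conceptual rather than computational: $f$ is the pointwise square of a band-limited function, and squaring destroys band-limitedness, so $(I-\Phi_\lambda\Phi_\lambda^*)f$ need not be small without the eigenvector triple-product assumption --- the step where I assert $\|(I-\Phi_\lambda\Phi_\lambda^*)f\| = \epsilon\tau$ is exactly where the hypothesis does all the work, and is why the bound degrades as $\lambda\to 1$. A secondary technical point is justifying that passing from the truncated sum $\sum_{|\lambda_\ell|>\lambda}$ to the full Parseval sum loses only a constant, and tracking the orientation of $|\lambda_\ell|>\lambda$ carefully so that the surviving factor is $1/\lambda$ rather than $\lambda$.
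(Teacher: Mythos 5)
Your proposal is correct and follows essentially the same route as the paper: split the difference of the two (squared) statistics along the spectral projection $\Phi_\lambda\Phi_\lambda^*$, bound the high-frequency remainder by Cauchy--Schwarz to get the $\left(\frac{1}{\sqrt{n+m}}+\|a_w\|\right)\epsilon\tau$ term, and control the low-frequency part via the quadrature estimate involving $\|P\sum_w a_w\delta_w\|_2^2-\frac{1}{n+m}$. The only difference is that you derive that quadrature inequality from scratch (using $P\mathbf{1}=\mathbf{1}$, self-adjointness, and the Pythagorean identity) where the paper simply cites the $\ell=2$ case of \cite{linderman2018integration}, so your write-up is, if anything, more self-contained.
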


\begin{proof}

Using Lemma \ref{lemma:mmd}, we know that $$\left|MMD_a(X,Y;k_R) - MMD(X,Y;K)\right| = \left|\sum_{r\in R} a(r) f(r) - \frac{1}{n+m} \sum_{z\in X\cup Y} f(z)\right|.$$  Now we examine the approximation error
\begin{align*}
\left| \frac{1}{n+m} \sum_v f(v) - \sum_w a_w f(w) \right| \le& \bigg| \frac{1}{n+m} \sum_v (\Phi_\lambda \Phi_\lambda^* f)(v) - \sum_w a_w (\Phi_\lambda \Phi_\lambda^*  f) (w) \bigg| +\\
& \bigg| \frac{1}{n+m} \sum_v ((I - \Phi_\lambda \Phi_\lambda^*) f)(v) \bigg| + \bigg| \sum_w a_w ((I - \Phi_\lambda \Phi_\lambda^*) f)(w) \bigg|\\
\le& \frac{\|f_\lambda\|}{\lambda} \left(\left\| P \sum_w a_w \delta_w \right\|_2^2 - \frac{1}{n+m}\right)^{1/2}  + \\
& \frac{1}{\sqrt{n+m}} \|(I-\Phi_\lambda \Phi_\lambda^*) f\| + \|a_w\| \cdot \|(I-\Phi_\lambda \Phi_\lambda^*) f\| \\
\le& \frac{(1-\epsilon)\tau}{\lambda} \left(\left\| P \sum_w a_w \delta_w \right\|_2^2 - \frac{1}{n+m}\right)^{1/2}  + \left( \frac{1}{\sqrt{n+m}} + \|a_w\| \right) \epsilon \tau.
\end{align*}
This is $\ell=2$ in \cite{linderman2018integration}, using the fact that $k$ is the squareroot of $K$.  The bounding of the residual terms comes from a Cauchy-Schwartz inequality.

\end{proof}

We note that both terms $ \left(\left\| P \sum_w a_w \delta_w \right\|_2^2 - \frac{1}{n+m}\right)^{1/2}$ and $\|a\|$ decay like $\frac{1}{\sqrt{|R|}}$.  This implies that, for a fixed $\lambda$ and $\epsilon$, the errror converges to the full $MMD(X,Y;K)$ as $|R|$ grows.  Also, this provides guarantees for the randomly choosen, equally weighted reference points from \cite{cheng2017Anisotropic}, in which $a_R(r) = \frac{1}{|R|}$ and the random selection of points can be tested for diffusion flattness on the given data sets.

\section{Concentration of Pointwise Product Energy}\label{tripleprod}

Now we must argue for our assumption that the projection $\Phi_\lambda \Phi_\lambda^*$ conserves most of the energy of $f$.  We use the notation that $P = \Phi \Lambda \Phi^*$, $K = \Psi \Sigma \Psi^*$, and because $K=k^2$ we know that $k = \Psi \Sigma^{1/2} \Psi^*$.  Now we note that
\footnotesize
\begin{align*}
\Phi_{<\lambda} \Phi_{<\lambda}^* f =& \sum_{\{\ell:|\Lambda_\ell| < \lambda\}} \sum_{z\in X\cup Y} \Phi_{\cdot, \ell} \Phi_{z,\ell} \left[ \left(\frac{\sqrt{n+m}}{n} \sum_x \Psi(x) - \frac{\sqrt{n+m}}{m}\sum_y \Psi(y)\right) \Sigma^{1/2} \Psi(z)^* \right]^2 \\
=& \sum_{\{\ell:|\Lambda_\ell| < \lambda\}} \Phi_{\cdot, \ell} \sum_z  \Phi_{z, \ell}  \Bigg( \sum_k \sum_{k'} \Phi_{z,k}\Phi_{z,k'} \Sigma^{1/2}_{k,k} \Sigma^{1/2}_{k',k'} \cdot\\
&\left(\frac{\sqrt{n+m}}{n}\sum_x \Psi_{x,k} - \frac{\sqrt{n+m}}{m}\sum_y \Psi_{y,k}\right) \left(\frac{\sqrt{n+m}}{n}\sum_x \Psi_{x,k'} - \frac{\sqrt{n+m}}{m}\sum_y \Psi_{y,k'}\right) \Bigg) \\
=& \sum_{\{\ell:|\Lambda_\ell| < \lambda\}} \Phi_{\cdot, \ell}\left\langle \Phi_{\cdot, \ell},  \sum_{k,k'} c_k c_{k'} \Psi_{\cdot, k} \Psi_{\cdot,k'} \right\rangle,
\end{align*}
\normalsize
where $c_k = \Sigma^{1/2}_{k,k} \left( \frac{\sqrt{n+m}}{n}\sum_x \Psi_{x,k} - \frac{\sqrt{n+m}}{m}\sum_y \Psi_{y,k}\right)$ and $\Psi_{\cdot,k} \Psi_{\cdot,k'}$ is understood to be the pointwise product fo the vectors.

If we assume $K\in L^4(\mathcal{X}\times \mathcal{X})$, we know that $c_k \in \ell^2$.  However, this does not preclude the pointwise product of low frequency eigenfunctions (i.e. eigenfunctions with significantly large eigenvalues) from containing significant energy in the high frequency regime.  Or similiarly, it doesn't prevent a high frequency and low frequency product from having substantial high frequency energy.  

The question of how the energy of pointwise products of eigenfunctions is spread over the spectrum is an open question.  Even in the case where $D_{i,i} = d$, which means the degree of each node is constant and makes $\Phi = \Psi$, the problem of pointwise product energy has only been partially studied \cite{steinerberger2017spectral,cloninger2018dual,filbir2011marcinkiewicz,sarnak1994integrals}.  While the problem may remain open in general, we have an additional term that regulates the size of the high frequency component, namely the empirical averages of the eigenfunctions across $X$ and $Y$.  This term additionally penalizes high frequency eigenvectors, and along with the small eigenvalues, regulates the energy of the high-high products.  

For example, \cite{burq2004multilinear} shows for compact manifolds that 
$$\|\Psi_k  \Psi_{k'}\|_2 \le -C \log(\max(\sigma_k,\sigma_{k'}))^{\frac{d-2}{2}} \textnormal{ for } d\ge 4.$$
Note, for $d=2,3$ there are similar bounds of $ \log(\max(\sigma_k,\sigma_{k'}))^{\frac{1}{4}}$ and $ \log(\max(\sigma_k,\sigma_{k'}))^{\frac{1}{2}+\epsilon}$, respectively. This would give us in those situations

\footnotesize
\begin{align*}
\Bigg\|\sum_{\{\ell:|\Lambda_\ell| < \lambda\}} \Phi_{\cdot, \ell}\left\langle \Phi_{\cdot, \ell},  \sum_{k,k'} c_k c_{k'} \Psi_{\cdot, k} \Psi_{\cdot,k'} \right\rangle\Bigg\|
\le& \Bigg\|\sum_{k<C,k'<C} c_k c_{k'}\sum_{\ell>L} \Phi_{\cdot, \ell}\left\langle \Phi_{\cdot, \ell}, \Psi_{\cdot, k} \Psi_{\cdot,k'} \right\rangle\Bigg\| + \\
&  2 \Bigg\| \sum_{k<C,k'>C} c_k c_{k'}\sum_{\ell>L} \Phi_{\cdot, \ell}\left\langle \Phi_{\cdot, \ell}, \Psi_{\cdot, k} \Psi_{\cdot,k'} \right\rangle\Bigg\| +\\
& \Bigg\|\sum_{k>C,k'>C} c_k c_{k'}\sum_{\ell>L} \Phi_{\cdot, \ell}\left\langle \Phi_{\cdot, \ell}, \Psi_{\cdot, k} \Psi_{\cdot,k'} \right\rangle\Bigg\| \\
\le& \sum_{k<C,k'<C} c_k c_{k'} \Bigg\|  \sum_{\ell>L}\Phi_{\cdot, \ell}\left\langle \Phi_{\cdot, \ell}, \Psi_{\cdot, k} \Psi_{\cdot,k'} \right\rangle \Bigg\| + \\
&  2\sum_{k<C,k'>C} c_k c_{k'} \Bigg\| \sum_{\ell>L}\Phi_{\cdot, \ell}\left\langle \Phi_{\cdot, \ell}, \Psi_{\cdot, k} \Psi_{\cdot,k'} \right\rangle\Bigg\| + \\
&\sum_{k>C,k'>C} \textnormal{Diff}_k \textnormal{Diff}_{k'} \sigma_k^{1/2}\sigma_{k'}^{1/2}\left( -C \log(\max(\sigma_k,\sigma_{k'}))^{\frac{d-2}{2}} \right)
\end{align*}
\normalsize
where $\textnormal{Diff}_k = \left( \frac{\sqrt{n+m}}{n}\sum_x \Phi_{x,k} - \frac{\sqrt{n+m}}{m}\sum_y \Phi_{y,k}\right)$.  This implies that the third term (the $k>C,k'>C$ term) can be bounded by use of the fact that $\sigma_k^{1/2} \log(\sigma_k)^{\frac{d-2}{2}}$ still decays quickly.  Similarly, the second term can also be bounded using the fact that $\sigma_k^{1/2}$ decays quickly and $C$ is chosen to be small.  

The main conjecture comes in bounding the first term.  Intuitively, the pointwise product of two low frequency eigenfunctions will remain somewhat low frequency, which means that the projection onto high frequency eigenvectors will be minimal.  While we cannot prove this statement in generality, we refer the reader to \cite{cloninger2018dual} for a large empirical study of Hadamard products of eigenfunctions.  Specifically, we note that low frequency eigenfunctions have a large $\|e^{t\Delta} \Psi_k \Psi_{k'}\|/\|\Psi_k \Psi_{k'}\|$ for small $k,k'$.  This implies that the vast majority of the enrgy is projected onto the low frequency eigenfunctions of $e^{t\Delta}$, meaning that $\|(
(I - \Phi_{\lambda} \Phi_{\lambda} ^* )\left(\Psi_k \Psi_{k'}\right)\|$ is small.

\begin{figure}[!h]
\footnotesize
\begin{tabular}{cc}
\includegraphics[width=.4\textwidth]{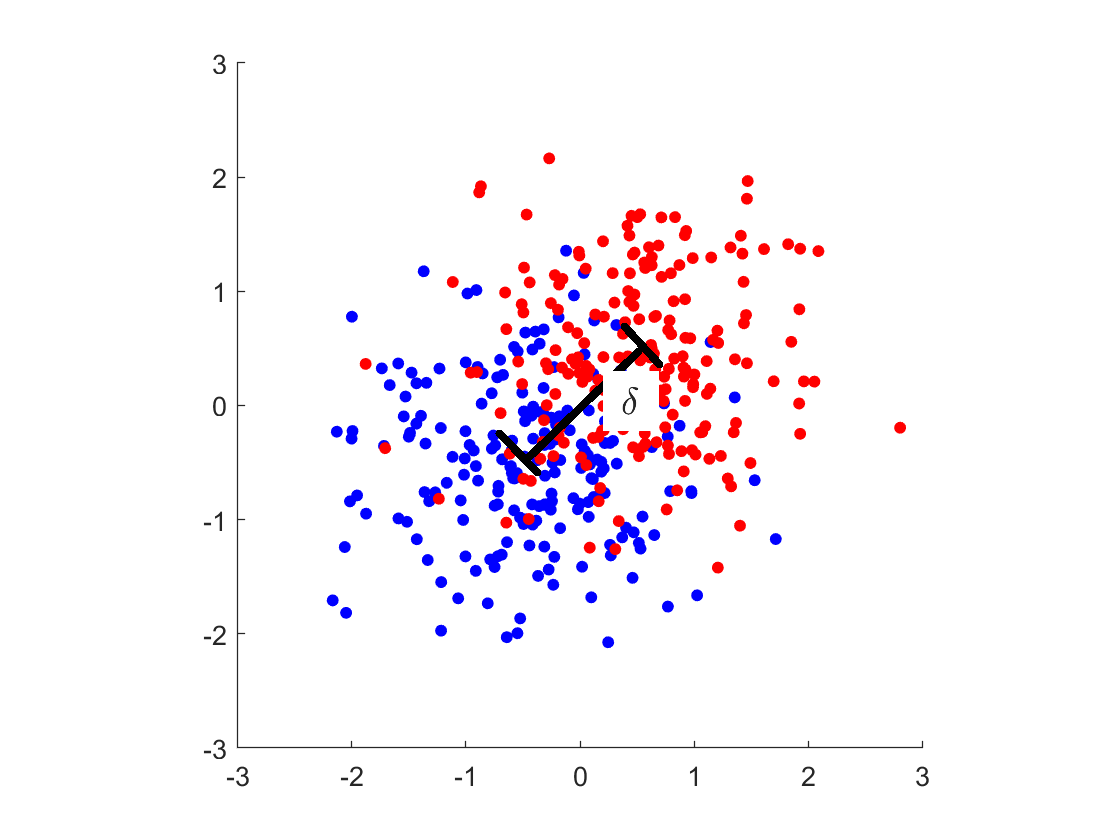} &
\includegraphics[width=.4\textwidth]{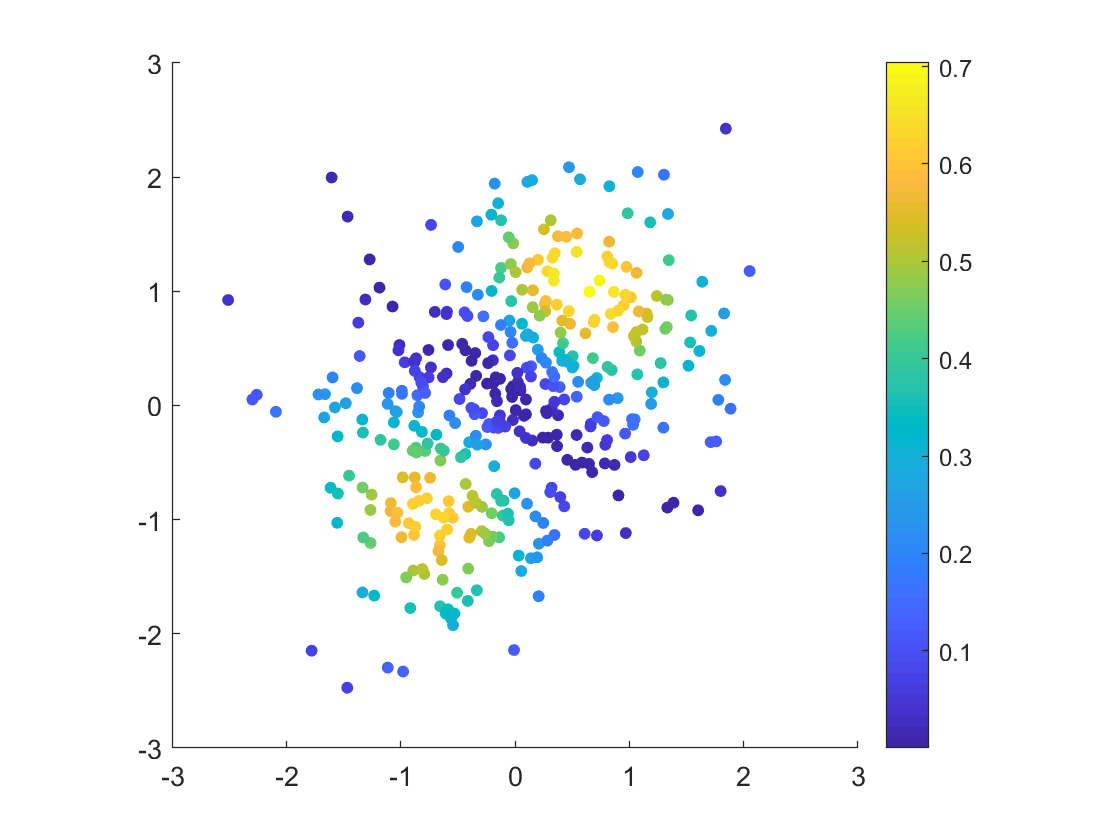} \\
Data Sep. by $\delta$ & Data Colored by $(\widehat{\mu_X} - \widehat{\mu_Y})^2$\\
\includegraphics[width=.4\textwidth]{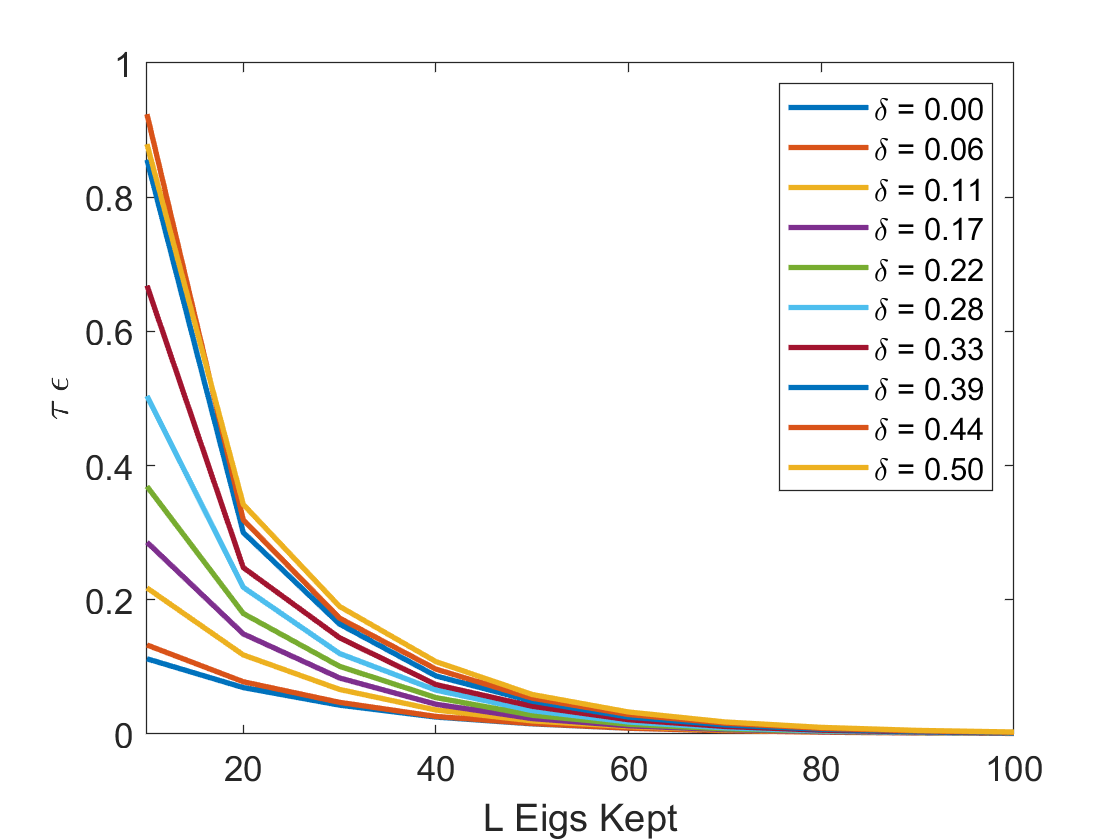} &
\includegraphics[width=.4\textwidth]{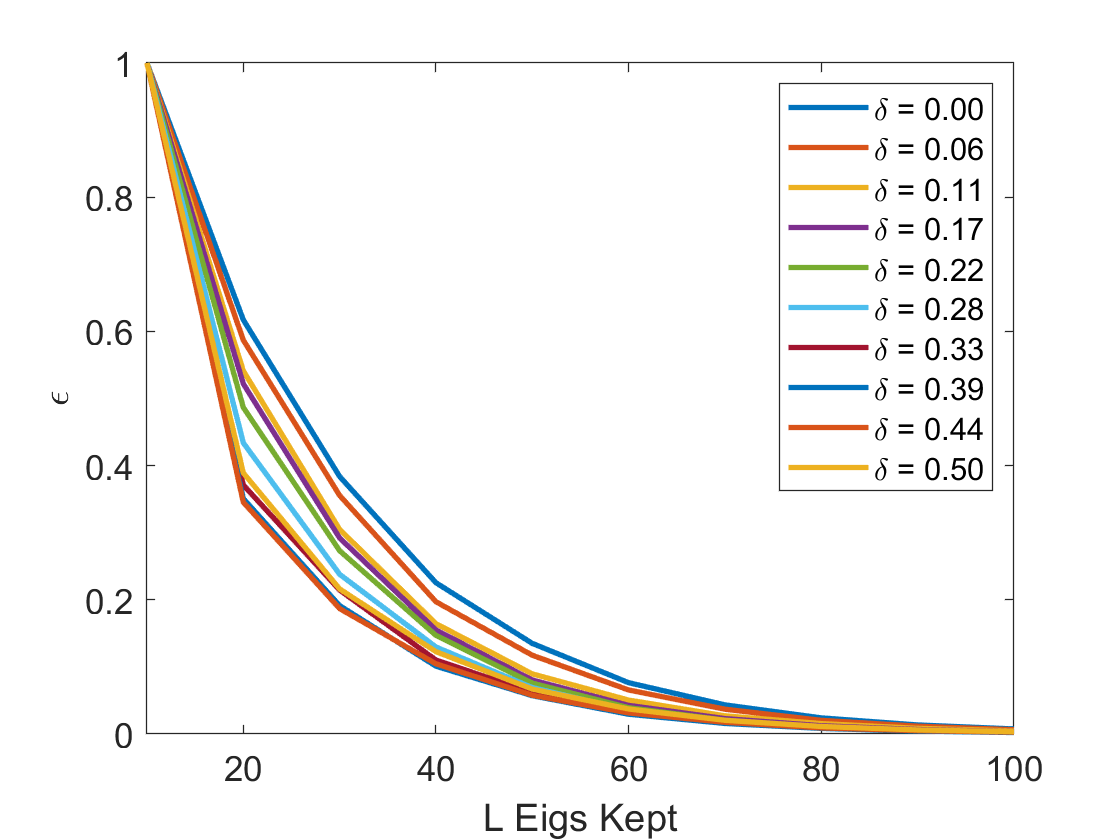} \\
Plot $\tau\epsilon$ varying $\delta$ and $\Phi_{<\lambda}$ & Plot $\epsilon$ varying $\delta$ and $\Phi_{<\lambda}$ 
\end{tabular}
\caption{Energy Remaining Averaged over 100 trials}\label{fig:gaussian2D}
\end{figure}

We empirically validate these statements in Figure \ref{fig:gaussian2D}.  We examine varying size shift of 2D gaussians consisting of $200$ points for each gaussian, with a kernel of bandwidth 0.5.  For each shift size, we run 100 trials and collect the size of $\|\Phi_{< \lambda_L} \Phi_{< \lambda_L}^* (\widehat{\mu_X} - \widehat{\mu_Y})^2\|$ for varying $\lambda_L$.  For all sizes of shifts, $\tau\epsilon$ decays quickly, and implies that it is only necessary to keep around $40$ eigenfunctions to bound most of the energy.  We also note that $\epsilon$ decays slowest when there is no shift (i.e. the null hypothesis), however in this situation $\tau$ is so much smaller that $\tau\epsilon$ remains small.


\section{Examples and Optimization}\label{examples}
The upper bound on $MMD_a(X,Y;k_R)$ suggests a possible algorithm for choosing the reference points, and associated weights, in a non-biased fashion for determining the two sample statistic.  Given a selection of reference points $R$, one can establish weights $a_w$ via
\begin{align}\label{eq:optWeight}
\arg\min_a  \bigg[\frac{(1-\epsilon)}{\lambda} & \left(\frac{1}{d_{max}^2} \left\| (K + d_{max}\cdot I - D) \sum_w a_w \delta_w \right\|_2^2 - \frac{1}{n+m}\right)^{1/2}  + \\
& \left( \frac{1}{\sqrt{n+m}} + \|a_w\| \right) \epsilon \bigg]\\
\textnormal{such that } & \|a_w\|_1 = 1,
\end{align}
for a selection of parameter $\lambda$ (which in tern selects $\epsilon$).  This enforces that the weights are optimized to diffuse across the entire dataset as quickly as possible, while still being sufficiently spread out in energy.  

Selection of the reference points themselves, $R$, is a more complicated problem.  It is possible to frame this problem as one of submodular optimization and create a convex relaxation \cite{hassani2017gradient}.  However, this would require knowledge of all columns of $K$ to frame the optimization, which defeats the purpose of using a reference set.  One heuristic that can be used is the following:
\begin{enumerate}
\item Randomly select $R\subset X\cup Y$ with $|R|\ll N$
\item Compute $k_R:{X\cup Y}\times R \rightarrow [0,1]$
\item If there exists $x \not\in R$ such that $\sum_{r\in R} k(x,r) < \delta$, set $R = R\cup \{x\}$
\item Estimte $D = \frac{n+m}{|R|}\sum_{r\in R} k_R(x,r)$
\end{enumerate}
This ensures that no points are too far from a reference point, while still being memory efficient.  

The benefit of optimizing in this way is that there is no a priori information brought in about differentiating $X$ from $Y$.  This prevents biasing of $MMD_a(X,Y;k_R)$ from the true $MMD(X,Y;K)$.  Also, computation of $MMD_a$ only requires computation of $O(N\cdot|R|\cdot d)$, as opposed to $O(N^2d)$ naively.

We demonstrate the use of this optimization and bound in a number of examples.  We note, the examples will be demonstrated using a non-traditional metric of comparison, namely a permutation test, rather than simply a plot of the MMD for various distributions.  This is because, while the reference point equal weighting $MMD(X,Y;k_R)$ is not as close to the true empricial $MMD(X,Y;K)$ as the weighted reference point $MMD_a(X,Y;k_R)$, $MMD(X,Y;k)$ is still equivalent to $MMD(X,Y;K)$ in expected value.  This means that displaying means over some number of trials won't reflect the importance of using the weights $a$.  Instead, we will focus on a comparison between $MMD(X,Y;K)$ and a number of permutations $MMD(Z_1,Z_2;K)$ where $Z_1,Z_2\subset X\cup Y$, and record the number of times $MMD(X,Y;K)$ is greater $(1-\alpha)$ of the permutations.  This yields a more reliable measure of accuracy (similar to measuring $|MMD(X,Y;K) - MMD(X,Y;k_R)|^2$), and is the ultimate objective of most two sample test algorithms.  This is equivalent to comparing the various statistics directly, as Theorem \ref{thm:mainthm} guarantees similarity to both $MMD(X,Y;K)$ and $MMD(Z_1, Z_2;K)$.

\subsection{Gaussian with small anomaly}
Let $p\sim N(0,I_5)$ and 
$$q\sim \begin{cases} N(0,I_5) & \textnormal{sampled with probability } (1-\delta)\\
N((2,2,2,2,2),0.1\cdot I_5) & \textnormal{sampled with probability } \delta
\end{cases},$$ 
for various values of $\delta$.  This effectively models two matching distributions, but with one developing a small anomalous cluster away from the bulk.  We display the average detected deviation for this example, for varying $\delta$ in Figure \ref{fig:2DGaussian}, using all points in $MMD(X,Y)$, $|R|=25$ randomly chosen reference points in $MMD_R(X,Y;k)$, and reference points and weights that are opitimized by \eqref{eq:optWeight} in $MMD_A(X,Y;k,a)$.  The optimized weights better track the full $MMD(X,Y)$ for varying $\delta$ than the randomized weights.  We note that $200$ points for each distribution and only $25$ total reference points is a stringent test of reeference point $MMD$, which we have purposely done for the purposes of highlighting the weighting.  If $|R|>50$ or so, then the simple randomized $MMD$ would match closely as well.

\begin{figure}[!h]
\centering
\includegraphics[width=.4\textwidth]{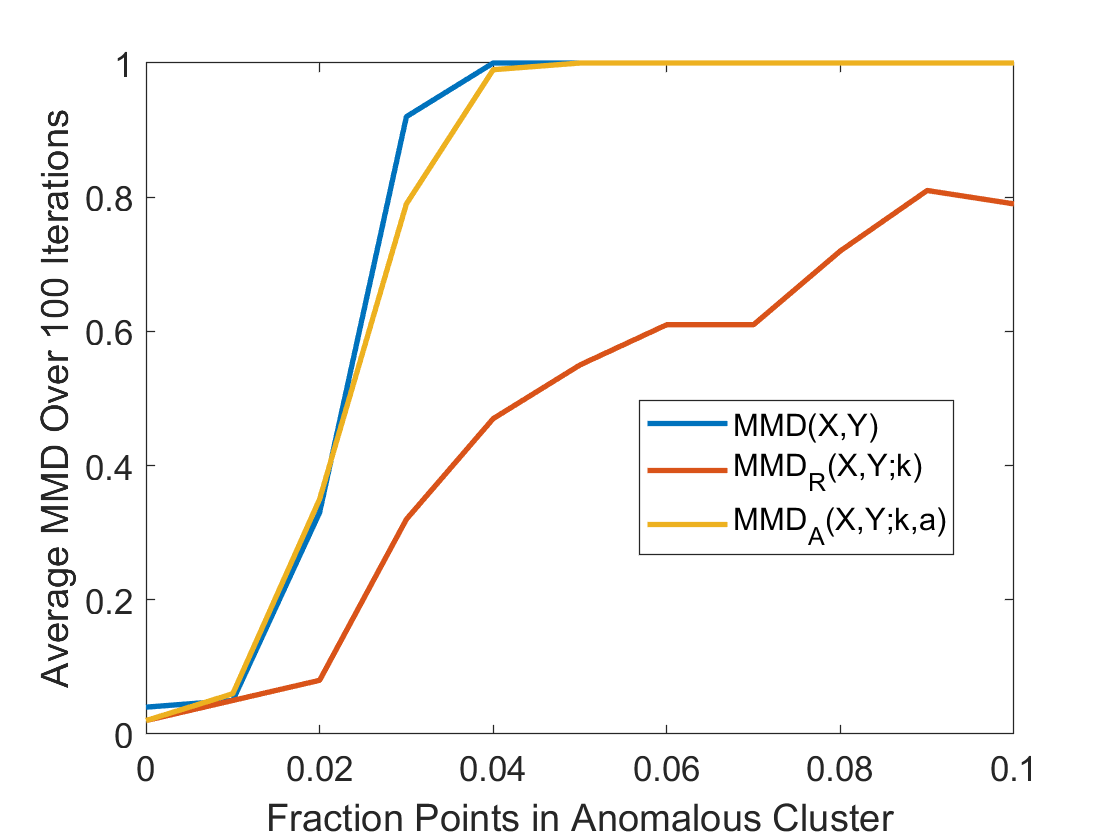} 
\caption{Full and weighted reference point MMD for a 5D Gaussian with an additional small anomalous cluster.   All curves compare the alternative against a permutation test and compute the probability of rejecting the null hypothesis.}\label{fig:2DGaussian}
\end{figure}

\subsection{Manifold local shift}
We let $p$ be a uniform distribution over a $5$ dimensional sphere, and $q$ be the same spherical distribution but with a slight elliptical increase of height delta forming on one direction.  
This models deviation of a lower-dimensional manifold embedded in a higher dimensional space, and a slight deviation that has significant effect on the eigenfunctions of the manifold Laplacian.  We display the average detected deviation in Figure \ref{fig:manifoldShift}.

\begin{figure}[!h]
\centering
\includegraphics[width=.4\textwidth]{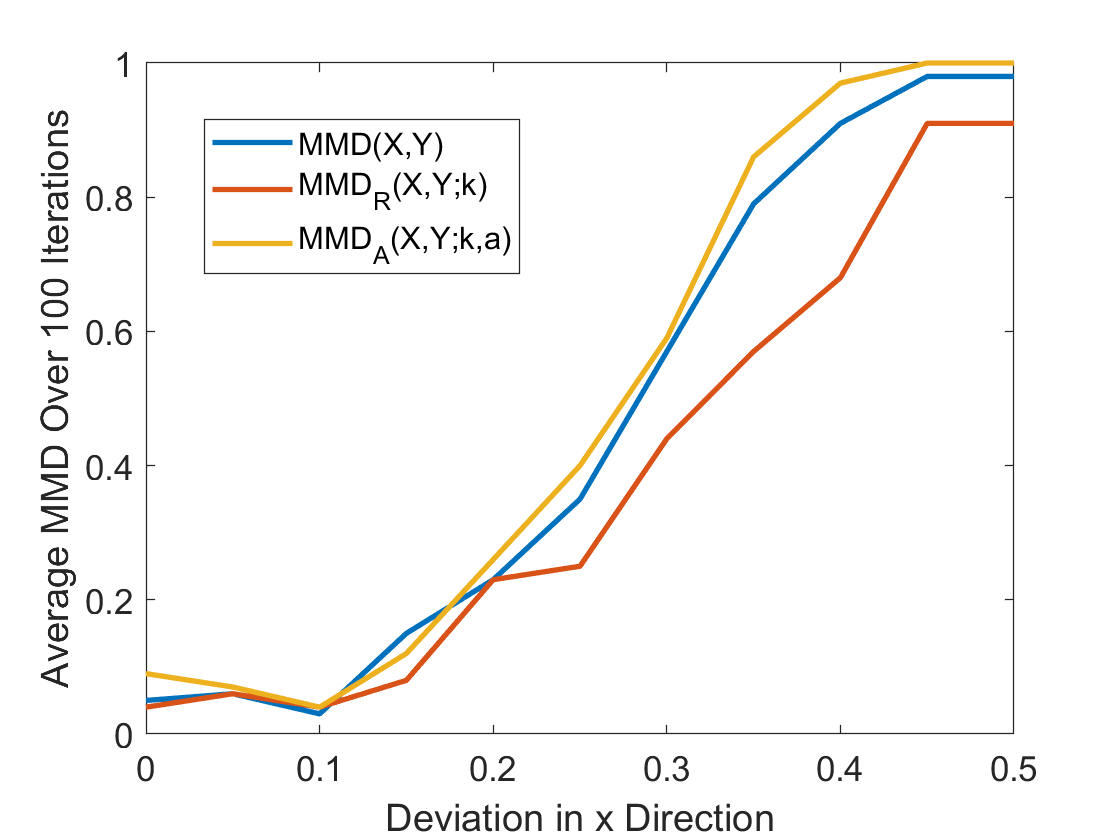} 
\caption{Full and weighted reference point MMD for differentiating a 5D sphere from a 5D elipse with principle axis length 
$[ 1+\delta , 1 , 1 , 1 , 1]$.   
All curves compare the alternative against a permutation test and compute the probability of rejecting the null hypothesis.}\label{fig:manifoldShift}
\end{figure}

\subsection{Gaussian mixture in 3D from paper}
Let $p$ be a gaussian mixture distribution with three highly anisotropic point clouds in $3D$, and $q$ to be a mean shift of each of these three clouds.   We examined this particular gaussian distribution example in \cite{cheng2017Anisotropic}, and showed that considering MMD with kernels that have anisotropic bandwidth has significantly more power than kernels with isotropic bandwidth.  In this paper, the goal is to show that weighting the small number of reference points yeilds an even more powerful test statistic.  Figure \ref{fig:gaussianmixture} shows an example of the clouds sampled from $p$ and $q$, as well as the relative power from the permutation test for isotropic kernels, anisotropic kernels at reference points with equal weight, and anisotropic kernels with weight vector $a$.  

This result will be different in flavor than the previous two.  In this situation, we are not attempting to match the traditional, full isotropic MMD.  By adding anisotropic kernels, we have already improved significantly on the power of the test, and thus have no desire to match the isotroipc MMD.  In this section, we wish to demonstrate that even here we can improve upon the power by using a non-uniform weighting on the reference points, and the weights can still be chosen by \eqref{eq:optWeight}.

\begin{figure}[!h]
\centering
\includegraphics[width=.4\textwidth]{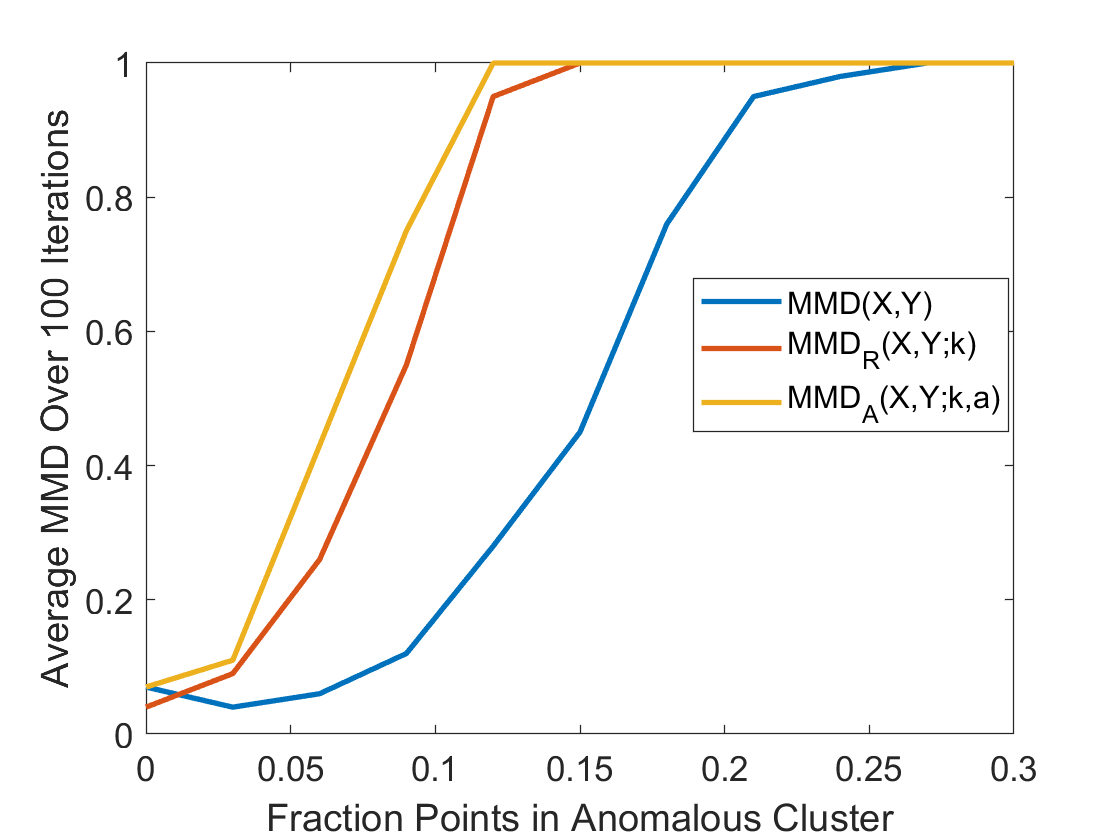} 
\caption{Comparing isotropic Gaussian MMD, reference point anisotropic MMD with randomly chosen equally weighted reference point, and reference point anisotroipc MMD with weighted equidistribued reference points.  All curves compare the alternative against a permutation test and compute the probability of rejecting the null hypothesis.}\label{fig:gaussianmixture}
\end{figure}

\section{Conclusions}
In this paper, we demonstrated that weighted reference point MMD can closely match the full MMD with significantly less computation and storage.  We bound the error in terms of heat diffusion from the reference points, and the bound suggests an optimization scheme for improving the reference point MMD even further.  There are several directions of future work spawning from this result, namely examining the estimate of $D$ and $d_{max}$ without building $K$, and fast methods for completing the optimization scheme through submodular optimization \cite{hassani2017gradient}.
%
%
%
%
%
%
%

\bibliography{Bibliography}{}
\bibliographystyle{plain}

\end{document}